\documentclass{article}

\usepackage{microtype}
\usepackage{graphicx}
\usepackage{subcaption}
\usepackage{booktabs} 

\usepackage{hyperref}
\usepackage{bm}
\usepackage{dsfont}
\usepackage{cancel}

\usepackage[preprint]{icml2026}

\usepackage{amsmath}
\usepackage{amssymb}
\usepackage{mathtools}
\usepackage{amsthm}
\usepackage[normalem]{ulem}

\usepackage[capitalize,noabbrev]{cleveref}

\theoremstyle{plain}
\newtheorem{theorem}{Theorem}[section]
\newtheorem{proposition}[theorem]{Proposition}

\theoremstyle{definition}
\newtheorem{definition}[theorem]{Definition}

\theoremstyle{remark}

\newcommand{\changesto}[2]{#1}

\usepackage[textsize=tiny]{todonotes}

\icmltitlerunning{Submission and Formatting Instructions for ICML 2026}

\begin{document}

\twocolumn[
    \icmltitle{Exploiting Subgradient Sparsity in Max-Plus Neural Networks}



  \icmlsetsymbol{equal}{*}

  \begin{icmlauthorlist}
    \icmlauthor{Ikhlas Enaieh}{yyy}
    \icmlauthor{Olivier Fercoq}{yyy}
  \end{icmlauthorlist}

  \icmlaffiliation{yyy}{Image, Data, Signal Department (IDS), Telecom Paris, Institut Polytechnique de Paris, France}

  \icmlcorrespondingauthor{Ikhlas Enaieh}{ikhlas.enaieh@telecom-paris.fr}
  \icmlcorrespondingauthor{Olivier Fercoq}{olivier.fercoq@telecom-paris.fr}

  \icmlkeywords{Machine Learning, ICML}

  \vskip 0.3in
]



\printAffiliationsAndNotice{}  

\begin{abstract}
Deep Neural Networks are powerful tools for solving machine learning problems, but their training often involves dense and costly parameter updates. In this work, we use a novel Max-Plus neural architecture in which classical addition and multiplication are replaced with maximum and summation operations respectively. This is a promising architecture in terms of interpretability, but its training is challenging. 
A particular feature is that this algebraic structure naturally induces sparsity in the subgradients, as only neurons that contribute to the maximum affect the loss. However, standard backpropagation fails to exploit this sparsity, leading to unnecessary computations. In this work, we focus on the minimization of the worst sample loss which transfers this sparsity to the optimization loss. 
To address this, we propose a sparse subgradient algorithm that explicitly exploits the algebraic sparsity. By tailoring the optimization procedure to the non-smooth nature of Max-Plus models, our method achieves more efficient updates while retaining theoretical guarantees. This highlights a principled path toward bridging algebraic structure and scalable learning.
\end{abstract}

\section{Introduction}
Deep Neural Networks (DNNs) have achieved remarkable success in tasks ranging from computer vision to natural language processing, due to their ability to learn complex model patterns from high-dimensional data \citep{deepLearning}. However, this expressiveness comes with significant computational cost: training such models typically involves dense updates to millions of parameters~\citep{dense_NN_computationalCost}, regardless of how many actually influence the model's output for a given sample. This inefficiency motivates the search for architectures and training algorithms that can exploit \changesto{additional structural properties of neural networks to reduce redundant computation, without compromising performance.}{sparsity without compromising performance.} \\
In this work, we focus on neural networks built using Max-Plus and Min-Plus algebras~\cite{morphological_NN}. Unlike traditional neurons that compute weighted sums of inputs,
\changesto{ these architectures rely on selection-based operations. A Max-Plus neuron replaces addition with a maximum and multiplication with summation, producing an output of the form
$\hat{y} = \max_j \{ x_j + w_j \},$
while a Min-Plus neuron computes the corresponding minimum,
providing a complementary behavior. In both cases, only the inputs attaining the maximum or minimum contribute to the neuron’s output, while all others are inactive.}{ Max-Plus neurons replace addition with a maximum operation and multiplication with summation. The output of a Max-Plus neuron is
$\hat{y} = \max_j \{ x_j + w_j \}$
, so only the inputs attaining the maximum affect the output, creating naturally sparse activations. Min-Plus neurons operate as
$\hat{y} = \min_j \{ w_j + x_j \}$ providing complementary behavior.}  These algebraic operations induce subgradients that are inherently sparse, offering a promising foundation for more efficient learning algorithms.
\\ \\
Unfortunately, conventional backpropagation and automatic differentiation frameworks are not optimized for such sparse, non-smooth structures. They compute all the coordinates of the gradients irrespective of the model's sparsity ~\citep{denseGradient}, resulting in redundant computation and limited scalability~\citep{computationalcomplexitylearningneural}. In contrast, the derivative information in Max-Plus and Max-Minus networks takes the form of a sparse subgradient: only the weights lying on the active paths, those corresponding to inputs that attain the maximum or minimum, receive nonzero updates, while all other coordinates remain zero~\citep{sparsity_maxplus_structure}. \changesto{However, standard optimization methods fail to exploit this property: they treat these models as dense and propagate updates to all parameters,}{Standard optimization methods fail to exploit this property, updating every parameter} instead of focusing computation where it matters. To overcome these limitations, we develop a sparse subgradient training algorithm tailored to the non-convex, non-smooth nature of Max-Plus/Min-Plus neural networks, enabling updates only along the active computational paths.

Our interest in these architectures stems from their ability to naturally induce sparsity in both forward and backward passes~\citep{sparsity_maxplus_structure}, making them appealing alternatives to dense networks. The $(\mathrm{max},+)$ and $(\mathrm{min},+)$ algebras have been applied in previous works, achieving competitive performance in various tasks. For instance, max-plus operators have been used for filter selection and model pruning~\citep{filter_selection}, and the Min-Max-Plus architecture has been shown to be a universal approximator for continuous functions~\citep{minmaxplus}. These results indicate that moving toward structured sparse architectures does not necessarily sacrifice expressivity.

\section{Problem Setup and Notation}

We begin by recalling the basic building block of our model.
\begin{definition}[Morphological Perceptron {\normalfont \citep{morphological_activation}}]
Given an input vector $\mathbf{x} \in \mathbb{R}^{N}_{\max}$ (with $\mathbb{R}_{\max} = \mathbb{R} \cup \{-\infty\}$), 
a weight vector $\mathbf{w} \in \mathbb{R}^{N}_{\max}$, and a bias $b \in \mathbb{R}_{\max}$, 
the morphological perceptron computes its activation as
\[
a(\mathbf{x}) = \max \left\{ b, \; \underset{1 \leq i \leq {\color{black}N}\ 
}{\max}  \{ x_i + w_i \} \right\},
\]
where $x_i$ (resp.\ $w_i$) is the $i$-th component of $\mathbf{x}$ (resp.\ $\mathbf{w}$).
\end{definition}

This perceptron highlights the distinctive feature of $(\max,+)$ algebra: 
only the inputs that achieve the maximum contribute to the activation, 
naturally leading to sparse forward computations. 
To study how this sparsity propagates during training, 
we now turn to the loss function and its subgradients.

\subsection{Loss Function and Sparsity Motivation}

Following~\citep{filter_selection}, we study the Linear Max-Plus model: 
a linear layer with ReLU activation, followed by a $(\max,+)$ layer 
and a softmax output. 
Let $z_{i,d}$ denote the pre-softmax score for sample $i$ and class $d$, and
\[
\hat{y}_{i,d} = \frac{\exp(z_{i,d})}{\sum_{d'=1}^C \exp(z_{i,d'})}
\]
be the predicted probability of class $d$, with $y_i \in \{1,\dots,C\}$ the true label.

We compare the sparsity of the subgradients of the 
Categorical Cross-Entropy (CCE) loss in two settings:

\begin{enumerate}
\item \textbf{Single-sample loss.} The loss for a randomly selected sample 
$i \sim \mathcal{U}(1,N)$ is
\[
\text{Loss}_i(w) = - \sum_{d=1}^C 
\mathds{1}_{\{y_i = d\}} \, \log (\hat{y}_{i,d}) .
\]

\item \textbf{Average loss.} The mean CCE across all $N$ samples is
\[
\text{Loss}_A (w)= \frac{1}{N} \sum_{i=1}^N \text{Loss}_i(w) .
\]
\end{enumerate}
\changesto{As a consequence of the max structure of the perceptron, the subgradients of these losses are sparse, resulting in only partial updates of the model parameters.}{
The max structure of the perceptron ensures that the subgradients of these losses 
are inherently sparse: only the weights associated with the active inputs (those 
attaining the maximum) receive nonzero updates.} 
To measure this effect, we adopt the sparsity metric~\citep{nesterov_subgradient}
\[
\gamma(x) = \frac{\text{number of non-zero elements in } x}{\dim(x)},
\]
which provides insight into the number of non-zero entries in the vector $x$.
As an example, we considered a morphological perceptron model initialized with i.i.d. Glorot uniform parameters~\cite{glorot2010understanding} and computed the sparsity level of a subgradient on the MNIST dataset~\citep{lecun2002mnist}.
We compared the subgradient sparsity for the average loss and the average subgradient sparsity for one image:
    $$\gamma \left( \cfrac{1}{N} \sum_{i=1}^{N} \cfrac{\partial \text{Loss}_{i}}{\partial w} \right) = 0.82, \;\;\cfrac{1}{N} \sum_{i=1}^{N} \gamma \left(\cfrac{\partial \text{Loss}_{i}}{\partial w} \right) = 0.048  $$
\changesto{These findings demonstrate that effectively exploiting the sparsity induced by Max-Plus structures requires training strategies based on individual examples rather than averaged updates.}{
These findings suggest that leveraging the sparsity induced by Max-Plus structures requires training algorithms that emphasize individual examples rather than uniform averages.} We therefore adopt a strategy that selects, at each iteration, the training sample with the \changesto{largest}{highest} loss. This \changesto{strategy leverages}{promotes} both subgradient sparsity, as shown in \citep{nesterov_subgradient}, and robustness, by concentrating learning on the network’s weakest predictions. \changesto{This leads us to consider the  following optimization problem}{The optimization problem we are thus going to try and solve is given by :} 
$$\min_w \mathcal{L}(w) = \min_w \underset{1 \leq i \leq {\color{black}N}\ 
}{\max} \text{Loss}_i(w) $$

While the average loss captures typical performance, the maximum loss directly controls the worst-classified sample. The following proposition shows that this control is in fact sufficient to ensure perfect classification on the training set.

\begin{proposition}[Perfect classification under a max-SCCE threshold]
\label{prop:maxloss}
If the maximum Sparse Categorical Cross-Entropy loss is strictly less than $\log 2$, then the model achieves $100\%$ classification accuracy on the training set.
\end{proposition}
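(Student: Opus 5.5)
The plan is to reduce the statement to a per-sample inequality on the predicted probability of the true class, and then show that this probability dominates every competing class. Write $\mathcal{L}(w) = \max_i \text{Loss}_i(w) < \log 2$. Since the maximum is below $\log 2$, each individual loss satisfies $\text{Loss}_i(w) < \log 2$ for every training sample $i$. Because the label is one-hot, the sparse categorical cross-entropy reduces to $\text{Loss}_i(w) = -\log \hat{y}_{i,y_i}$. Exponentiating, using that $\exp$ is increasing, gives $\hat{y}_{i,y_i} > 1/2$ for every $i$.

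Next I would turn this into a strict argmax statement, using only that the softmax outputs are nonnegative and sum to one. For any class $d \neq y_i$,
\[
\hat{y}_{i,d} \le \sum_{d' \neq y_i} \hat{y}_{i,d'} = 1 - \hat{y}_{i,y_i} < \tfrac12 < \hat{y}_{i,y_i}.
\]
So the true class is the unique maximizer of $\hat{y}_{i,\cdot}$. Since softmax is strictly increasing in each score, it is equivalently the unique maximizer of the pre-softmax scores $z_{i,\cdot}$. The predicted label $\arg\max_d \hat{y}_{i,d}$ therefore equals $y_i$ without any tie-breaking ambiguity.

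Since this holds for every $i \in \{1,\dots,N\}$, all training samples are correctly classified, which gives $100\%$ training accuracy. Nothing here depends on the Max-Plus architecture: the argument uses only the softmax output layer and the cross-entropy form of the loss.

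There is no real obstacle. The only point needing care is strictness: the hypothesis is a strict inequality, which yields $\hat{y}_{i,y_i} > 1/2$ strictly and hence rules out ties in the argmax. I would also remark that the threshold $\log 2$ is sharp for $C \ge 2$. If the loss equals $\log 2$, then $\hat{y}_{i,y_i} = 1/2$, and a wrong class can also reach probability $1/2$, producing a tie.
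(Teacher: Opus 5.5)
Your proof is correct and is essentially the paper's argument run in the direct direction. The paper argues by contradiction: it takes a wrong class with $Z_{d'}(x_n)\ge Z_{y_n}(x_n)$ and bounds the softmax denominator below by $2e^{Z_{y_n}(x_n)}$, which forces $\hat{Y}_{y_n}(x_n)\le 1/2$ and a loss of at least $\log 2$; you instead derive $\hat{y}_{i,y_i}>1/2$ directly and use that the probabilities sum to one, and ties are excluded in both versions.
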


The proof is provided in Appendix~\ref{proposition_proof}.

Nevertheless, computing the maximum loss over a large dataset at each step would require $\mathcal{O}(N)$ operations, which is computationally expensive. To mitigate this, we introduce a Short Computational Tree (SCT) structure that enables logarithmic-time updates and maximum tracking \citep{nesterov_subgradient}. This makes our max-loss formulation scalable to realistic datasets.

\subsection{Short Computational Tree}

\begin{definition}[Short Computational Tree (SCT) {\normalfont \citep{nesterov_subgradient}}]
Given an input vector $x \in \mathbb{R}^{\color{black}N}$ with ${\color{black}N} = 2^k$, $k \geq 1$, 
the Short Computational Tree (SCT) is a binary tree of height $\log_2 {\color{black}N}$ and $k+1$ levels, 
constructed to compute symmetric functions (e.g., $\max$) hierarchically.  

At the base level, each leaf node contains one entry of the input vector $x$.  
Each internal node computes the binary maximum of its two children:
\begin{itemize}
    \item At level $0$, the tree holds the input values:
    \[
    v_{0,i} = x(i), \quad i = 1, \dots, {\color{black}N}.
    \]
    \item The internal nodes of the tree are computed recursively:
\[v_{i+1,j} = \psi_{i+1,j}(v_{i,2j-1}, v_{i,2j}),\] 
 $i = 0, \dots, k-1,\quad  j = 1, \dots, 2^{k-i-1}.$

    In our case, $\psi_{i,j}(a,b) = \max(a,b)$.
\end{itemize}
The root node contains the overall maximum.
\end{definition}

\changesto{This hierarchical construction computes the maximum through a structured sequence of pairwise comparisons, which can be reused across iterations. While the initial construction of the SCT has the same cost as a standard linear scan, its advantage becomes apparent once the maximum must be updated repeatedly. In our setting, each iteration modifies only a single input entry: let $x$ denote the current input vector and $\tilde{x}$ the updated vector after one iteration. The effect of this  change propagates along a unique path from the corresponding leaf to the root of the tree. 
Consequently, updating the maximum requires visiting only one node per level of the tree. 
}{Since we are dealing with the maximum of the loss, the subgradient will be sparse and we are going to update only a few values at each iteration. Assume $x \in \mathbb{R}^N$ and $\tilde{x}$ differs from $x$ in only one coordinate. It was shown in~\citep{nesterov_subgradient} how we can do these sparse updates efficiently by starting at a leaf of the short computational tree and processing only along the path to the root. The complexity comparison between this procedure and computing the maximum from scratch is shown in Table~\ref{tab:sct_complexity}.}
\begin{table}[h]
\centering
\vspace{0.5\baselineskip}  
\caption{Complexity of maximum computation: Standard Maximum vs. SCT representation} 
\vspace{0.5\baselineskip}  
\label{tab:sct_complexity}
\begin{center}
\renewcommand{\arraystretch}{1.2}
\begin{tabular}{llll}
\textbf{Operation} & \textbf{Target} & \textbf{Standard} & \textbf{SCT} \\
\hline \\
Compute & $\underset{1 \leq i \leq N}{\max} \; x_i$ 
        & $\mathcal{O}(N)$ 
        & $\mathcal{O}(N)$ \\
Update  & $\underset{1 \leq i \leq N}{\max} \; \tilde{x}_i$ 
        & $\mathcal{O}(N)$ 
        & $\bm{\mathcal{O}(\log N)}
$ \\
\end{tabular}
\end{center}
\end{table}

\changesto{As summarized in Table~\ref{tab:sct_complexity}, the use of the SCT reduces the cost of updates from $\mathcal{O}(N)$ to $\mathcal{O}(\log N)$, making it a significantly more efficient alternative for iterative max-based optimization.}{From Table~\ref{tab:sct_complexity}, we observe that using the SCT does not change the cost of computing the maximum, but it drastically reduces the cost of updates from $\mathcal{O}(N)$ to $\mathcal{O}(\log N)$ making it a far more efficient alternative.}

This efficiency is particularly important once we embed the max-loss formulation into concrete network architectures, starting with the zero hidden layer model.

\section{Model Architecture}
\subsection{Zero Hidden Layer Model}

We begin with a baseline architecture for classification, where the network maps the input directly to the output class scores using a single Max-Plus layer followed by a softmax activation. 

Formally, the training objective is
\[
\min_{W} \max_{1 \leq n \leq N} \left( -\log({\color{black}{\hat{y}}}_{n, y_n}) \right),
\]
where \changesto{$y_n \in \{1,\dots,C\}$}{$y_n$} is the true label for sample $n$ and the predicted probability ${\color{black}{\hat{y}}}_{n,d}$ for class $d$ is given by
\begin{equation}
{\color{black}{\hat{y}}}_{n,d} = \frac{\exp\!\left( \max_p \big({\color{black}{X}}_{n,p} + {\color{black}{W}}_{d,p}\big) \right)}
{\sum_{d'=1}^{C} \exp\!\left( \max_p \big({\color{black}{X}}_{n,p} + {\color{black}{W}}_{d',p}\big) \right)}.
\label{formula_Yhat_0-hidden}
\end{equation}
\changesto{Here, $X \in \mathbb{R}^{N \times P}$ is the data matrix with entries $X_{n,p}$ denoting the $p$-th feature of sample $n$, and $W \in \mathbb{R}^{C \times P}$ is the weight matrix.
}{Here, ${\color{black}{x}}_{n,p}$ denotes the $p$-th feature of sample $n$, and $W \in \mathbb{R}^{C \times P}$ is the weight matrix.}

Developing the expression yields
\begin{align}
\min_W \max_{n} \Big(
  &-\max_{p}\big({\color{black}{X}}_{n,p}+{\color{black}{W}}_{y_n,p}\big) \nonumber\\
  &+ \log \sum_{d=1}^{C} \exp\!\big(\max_{p}({\color{black}{X}}_{n,p}+{\color{black}{W}}_{d,p})\big)
\Big).
\end{align}

Because this objective involves multiple $\max(\cdot)$ terms, one for each class, the benefit of the \changesto{SCT}{Short Computational Tree (SCT)}, introduced earlier, becomes especially significant. After the initial $\mathcal{O}(N)$ computation at the first iteration, subsequent iterations require only $\mathcal{O}(\log N)$ updates, so that across training the vast majority of computations enjoy reduced complexity.

To fully leverage this computational advantage, \changesto{we now introduce}{we next turn to} the analytical framework 
required for differentiating through our nonsmooth architecture.

\subsection{Subgradient Computation}
\changesto{To handle the nonsmooth operators induced by the max-based structure of our model, we rely on the framework of conservative set-valued fields introduced by \citep{conservativesetvaluedfields}. This framework provides a generalized notion of differentiation that extends classical calculus to nonsmooth functions while preserving a consistent chain rule. Functions admitting such fields are referred to as path-differentiable, a class broad enough to include convex, concave, Clarke-regular, and semialgebraic Lipschitz continuous functions. In particular, it supports the composition of operators such as the maximum and the $\log\!\sum\exp$,  enabling backpropagation computations to be rigorously extended to our nonsmooth architecture. This framework forms the analytical foundation for the subgradient derivations that follow.}{To handle the nonsmooth operators that appear in our model, 
we rely on the framework of conservative set-valued fields~\citep{conservativesetvaluedfields}, 
a generalized derivative that extends classical calculus to nonsmooth functions. 
Functions equipped with such fields are called path differentiable, a class broad 
enough to cover convex, concave, Clarke-regular, and semialgebraic Lipschitz continuous 
functions. This guarantees that even in the presence of operators such as 
$\max$ and $\log\!\sum\exp$, a consistent chain rule applies. 
Hence, backpropagation-style computations can be rigorously extended 
to our nonsmooth setting, providing the foundation for the derivations below.}

To simplify notation, we define the critical indices. 
Let
\begin{align*} n^* \in \arg\max_n \Big(  &-\max_{p}\big({\color{black}{X}}_{n,p}+{\color{black}{W}}_{y_n,p}\big) \nonumber\\
  &+ \log \sum_{d=1}^{C} \exp\!\big(\max_{p}({\color{black}{X}}_{n,p}+{\color{black}{W}}_{d,p})\big) \Big) \end{align*}
be the index of one of the worst-classified sample, 
\[
p^*(d) \in \arg\max_p \big({\color{black}{X}}_{n^*,p} + {\color{black}{W}}_{d,p}\big)
\]
a maximizer over features for a given class $d$ and the worst-classified sample $n^*$.  
and $d^* := y_{n^*}$.

With this notation, the subgradient of the loss with respect to ${\color{black}{W}}_{i,j}$ becomes
\begin{align}
\frac{\partial \mathcal L}{\partial {\color{black}{W}}_{i,j}}(W)
= -\mathbb{I}(i = d^*) \mathbb{I}(j = p^*_{d^*})
   + \hat {\color{black}{y}}_{n^*, i}\;\mathbb{I}(j=p^*_i). \label{calculate_gradient_0hidden}
\end{align}
A \changesto{detailed}{complete} derivation is provided in appendix \ref{supp:subgrad-proof}.
Thus, the sparse subgradient matrix contains at most $C$ nonzero entries out of the total $C\times P$, with one entry per class $d \in \{1,\dots,C\}$ located at $(d, p^*_d)$, 
including the entry corresponding to the true class $d^*$. 

Although this formulation is elegant and highlights the role of sparsity, 
our experiments revealed that the loss plateaued at a relatively high value. 
This limitation motivates the introduction of a more expressive architecture 
with hidden layers, \changesto{presented in the next section.}{discussed next.}

\subsection{Model with One Hidden Layer}
We now consider a more expressive architecture inspired by Linear Min–Max (LMM) networks~\citep{minmaxplus}, applied sequentially to each input sample. We will first describe the model in the regression setting, as established in the original approximation theorem, then we will show how to extend it to a classification task.

\begin{itemize}
\item \textbf{Linear Layer:} Let ${\color{black}P}$ denote the number of input features and $x_n = X_{n,:}=(X_{n, 1}, \dots, X_{n,P})\in\mathbb{R}^{\color{black}P}$ the $n$-th sample.
We apply a sparse linear transformation
\[
\lambda(x) = [\,K_{0}x_1,\,-K_{1}x_1,\,\dots,\,K_{2{\color{black}P}-1}x_{\color{black}P},\,-K_{2{\color{black}P}}x_{\color{black}P}\,],
\]
where $K$ is a vector of parameters. This can be also written as
\[
\lambda(x) = W^0 x \in \mathbb{R}^{2{\color{black}P}}, 
\quad W^0 \in \mathbb{R}^{2P \times {\color{black}P}},
\]
where $W^0$ is a sparse matrix with predefined sparsity pattern.

    \item \textbf{Min-Plus Layer:} Let $ h = \{1,\dots,H_1\}$ denote the hidden neurons in this layer. Then :
    \[
    g_{h}(x) \;=\; \min_{i \in \{1,\dots,2{\color{black}P}\}} 
    \big( \lambda_i(x) + W^1_{i,h} \big),
    \]
    where $W^1 \in \mathbb{R}^{2{\color{black}P} \times H_1}$ is the (min,+) weight matrix.

\item \textbf{Max-Plus perceptron:}
The output of the LMM is given by
\[
\hat f_{W^0, W^1, W^2}(x) = \max_{h \in \{1,\dots,H_1\}} 
    \big( g_{h}(x) + W^2_{h} \big).
\]

A key theoretical foundation for our work is the following result from~\citep{minmaxplus}: 

\begin{theorem}[Universal Approximation of Lipschitz Functions using LMM Networks]
\label{thm:universal_approx}
Let $f $ be any Lipschitz-continuous function defined on a compact domain ${\color{black}\mathcal{X}} \subset \mathbb{R}^{\color{black}P}$. Then, there exists a sparse linear map $W^0$ and sequences of weight matrices $(W^{1,m})_{m}$ and $(W^{2, m})_{m}$ such that the {\color{black}corresponding} sequence of LMM networks $(\hat{f}_{W^0, W^{1, m}, W^{2, m}})_{\color{black}m \ge 1}$,  converges uniformly to $f$ as $m \to \infty$. That is,
\[
\lim_{m \to \infty} \sup_{x \in X} | \hat{f}_{W^0,W^{1, m},W^{2, m}}(x) - f(x) | = 0.
\]
\end{theorem}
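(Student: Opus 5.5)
The plan is to use the classical McShane-type construction of a Lipschitz function as an envelope of cones, and show that each cone can be written as a min-plus neuron built on the sparse linear map $\lambda$.

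\textbf{Step 1 (cones as min-plus neurons).} Let $f$ be $L$-Lipschitz with respect to the $\ell_\infty$ norm. Such an $L$ exists because all norms on $\mathbb{R}^P$ are equivalent. Fix the sparse linear map $W^0$ with
\[
K_{2p-2}=K_{2p-1}=L ,
\]
so that $\lambda(x)=(Lx_1,-Lx_1,\dots,Lx_P,-Lx_P)$. For an anchor point $c\in\mathcal{X}$, choose the column $h$ of $W^1$ by
\[
W^1_{2p-1,h}=-Lc_p, \qquad W^1_{2p,h}=Lc_p .
\]
Then
\[
g_h(x)=\min_p \min\bigl(L(x_p-c_p),\,-L(x_p-c_p)\bigr)=-L\|x-c\|_\infty .
\]
Setting $W^2_h=f(c)$ gives the downward cone
\[
\phi_c(x)=f(c)-L\|x-c\|_\infty .
\]
This cone lies below $f$ by the Lipschitz property, and it equals $f$ at $x=c$.

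\textbf{Step 2 (max of cones over a net).} For each $m$, take a finite $1/m$-net $\{c_1,\dots,c_{H_m}\}$ of the compact set $\mathcal{X}$. Use $H_1=H_m$ hidden neurons, and define $W^{1,m},W^{2,m}$ as in Step 1. Then
\[
\hat f_m(x)=\max_h \phi_{c_h}(x)\le f(x).
\]
For the reverse bound, given $x$ pick $c_h$ with $\|x-c_h\|_\infty\le 1/m$. Then
\[
\hat f_m(x)\ge \phi_{c_h}(x)\ge f(x)-2L/m ,
\]
using $f(c_h)\ge f(x)-L/m$. Hence
\[
\sup_{x\in\mathcal{X}}|\hat f_m(x)-f(x)|\le 2L/m\to 0 .
\]
This is uniform convergence, and $W^0$ does not depend on $m$, as the statement requires.

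\textbf{Main obstacle.} The only delicate point is matching the sign and index conventions of $\lambda$ and $W^1$, so that the min-plus layer produces exactly $-L\|x-c\|_\infty$ rather than some other polyhedral norm. If the architecture forced the two coefficients $K$ for a coordinate to differ, I would instead use a weighted $\ell_\infty$ norm with weights $\min(K_{2p-2},K_{2p-1})$ bounded below. The same cone argument still applies, since $f$ remains Lipschitz for any equivalent norm with a suitable constant. The net size $H_m$ grows with $m$, which is allowed because the theorem places no restriction on the width.
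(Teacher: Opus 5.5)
Your proposal is correct and follows essentially the same route as the paper: a fixed sparse map $\lambda(x)=(Lx_1,-Lx_1,\dots,Lx_P,-Lx_P)$, min-plus neurons that realize the cones $f(c)-L\|x-c\|_\infty$, and a max over a $\delta$-dense set of centers, which gives the uniform bound $2L\delta$. The only differences are cosmetic: the paper folds $f(x^i)$ into the min-plus biases $a_{ij}=-\lambda_j(x^i)+f(x^i)$ rather than into $W^2$, and your explicit $1/m$-net makes precise the paper's informal assumption that every point of $\mathcal{X}$ is close to some grid point.
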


 \changesto{This theorem provides the starting point for our approach. In particular, the
detailed proof in appendix \ref{ProofThmUniversalApprox} reveals that the transformation $\lambda$ can be chosen to be sparse, a property that directly motivates our parameter
initialization strategy. Specifically, $W^0$ depends only on the target function $f$, whereas the matrices $W^{1,m}$ and $W^{2,m}$ grow in size with the approximation parameter $m$.}{This theorem provides the starting point for our approach. We interpret the transformation \(\lambda\) not only as linear but also as \emph{sparse}, a perspective that we make explicit in the detailed proof included in appendix \ref{ProofThmUniversalApprox} Furthermore, the construction in the proof directly motivated our initialization strategy. In the proof of convergence, $W^0$ depends on $f$ only but $W^{1,m}$ and $W^{2,m}$ are growing in size with $m$.} For each $m$, we select $m$ points $(x^1, \dots, x^m)$ in $\mathcal{X}$ and each hidden neuron is responsible for interpolating $f$ around $x^m$. Our initialization, described formally in Section~\ref{sec:init}, corresponds to selecting a small number of samples and initializing the neural network parameters as in the interpolation result for this small number of samples.
This yields a principled and effective starting point for our model design.  

\changesto{While Theorem~\ref{thm:universal_approx} establishes an approximation result for
real-valued functions, we build on this framework to construct an LMM model for
vector-valued outputs $f : \mathbb{R}^{\color{black}P} \to \mathbb{R}^C$.}{Let us explain how we can extend the LMM, designed for functions from $\mathbb R^{\color{black}P}$ to $\mathbb R$, to classification tasks where $f:\mathbb R^{\color{black}P}\to\mathbb R^C$.}
We replace the single Max-Plus perceptron by a Max-Plus layer and a softmax activation.
    \item \textbf{Max-Plus Layer \& Softmax:} For each class $d \in \{1, \dots, C\}$, the class score is
    \[
    {\color{black}z}_{d}(x) \;=\; \max_{h \in \{1,\dots,H_1\}} 
    \big( g_{h}(x) + {\color{black}W}^2_{h,d} \big),
    \]
    with $W^2 \in \mathbb{R}^{H_1 \times C}$, and the predicted probabilities are
    \[
    \hat{{\color{black}y}}_{d}(x) \;=\; 
    \frac{\exp\!\big( {\color{black}z}_{d}(x) \big)}
         {\sum_{d'=1}^{C} \exp\!\big( {\color{black}z}_{d'}(x) \big)}.
    \]
\changesto{The corresponding
approximation guarantee and explicit constructions of $(W^{1,m})_m$ and
$(W^{2,m})_m$ are provided in appendix \ref{supp:pf-vs-exp}.}{\text We show in Section \ref{supp:pf-vs-exp} in the supplementary material that the approximation theorem extends to the classification case with an explicit formula for the sequences $(W^{1, m})_m$ and $W^{2, m}$.}
\end{itemize}

\subsection{Initialization}
\label{sec:init}
Let $(x_n, y_n) \in \mathbb{R}^{\color{black}P} \times \{1,\dots,C\}$ denote a labeled sample. 
We initialize the LMM network as a composition of three layers: 
a sparse linear transformation $\lambda$ with weight matrix $W^0 \in \mathbb{R}^{2{\color{black}P} \times {\color{black}P}}$, 
followed by a Min-Plus layer with weights $W^1 \in \mathbb{R}^{2{\color{black}P} \times H}$, 
and a final Max-Plus layer parameterized by $W^2 \in \mathbb{R}^{H \times C}$, where $H$ is the number of hidden neurons.
The transformation $\lambda$ encodes both positive and negative directions for each feature, defined by:
\[
W^0_{2i-1, i} = k, 
\quad 
W^0_{2i, i} = -k, 
\quad \text{for } i = 1, \dots, {\color{black}P},
\]
with all other entries of $W^0$ equal to zero. The scalar \( k \) serves as an initial scaling parameter and is adjusted during training through subgradient updates, effectively acting as a learnable quantity. The corresponding transformed input is then $\lambda(x_n) = W^0 x_n \in \mathbb{R}^{2{\color{black}P}}$.

The hidden layer weights $W^1$ are initialized using a subset of $H$ randomly chosen training samples $\{x_{n_h}\}_{h=1}^H$, ensuring interpolation:
\[
W^1_{2i-1, h} = -k \cdot x_{n_h, i}, 
\quad
W^1_{2i, h} = k \cdot x_{n_h, i}.
\]
The output layer weights $W^2$ encode class separation by boosting the true class and penalizing the others:
\[
W^2_{h,d} = 
\begin{cases}
k, & \text{if } d = y_{n_h}, \\
-k, & \text{otherwise}.
\end{cases}
\]
\changesto{This initialization associates each hidden neuron with a specific training
sample, while the output layer penalizes incorrect classes for that sample.
}{This initialization ensures that each of the $H$ neurons supports one specific training sample exactly, while penalizing incorrect classes.}

The scaling parameter \( k \) plays a central role in shaping the geometry and confidence of the network outputs. In our initialization, $k$ controls the slope of the piecewise-linear functions used in the approximation (as shown in the proof of Theorem~\ref{thm:universal_approx}). 

\subsection{Subgradient Derivation}
Define the training objective:
\[
\mathcal L(W) \;=\; \max_{1\le n\le N} \text{Loss}_n(W)
\;=\; \max_{1\le n\le N} \bigl(-\log \hat{y}_{y_n}(x_n)\bigr).
\]

Let $n^\star \in \arg\max_n \text{Loss}_n(W)$.

For each class $d$, choose
\begin{align*}
h_d^\star &\in \arg\max_{h} \Big( g_{h}(x_{n^*}) + W^2_{h,d} \Big), \\[6pt]
i_d^\star &\in \arg\min_{i} \Big( \lambda_{i}(x_{n^*}) + W^1_{i,h_d^\star} \Big)
\end{align*}
A subgradient of $\mathcal L$ at $W=(W^0, W^1, W^2)$ can be found as follows:

\noindent\textbf{(a) Subgradient w.r.t.\ $W^2$:}
The calculation is very similar to \eqref{calculate_gradient_0hidden}
\[
\frac{\partial \mathcal L}{\partial W^2_{h,d}} \;=\;
\mathbb I_{\{h = h^*_d\}}(\hat{y}_d(x_{n^*}) - \mathbb{I}_{\{d = y_{n^\star}\}})\;.\]

\noindent\textbf{(b) Subgradient w.r.t.\ $W^1$:}
\[
\cfrac{\partial \mathcal L}{\partial W^1_{i,h}} \;=\;
\begin{cases}
\cfrac{\partial \mathcal L}{\partial W^2_{h,d}}, & \text{ if } h = h_d^\star \text{ and } i = i_d^\star,\\[3pt]
0, & \text{otherwise.}
\end{cases}
\]

\noindent\textbf{(c) Subgradient w.r.t.\ $W^0$:}
\[
\cfrac{\partial \mathcal L}{\partial W^0_{i,p}} \;=\;
\begin{cases}
 \cfrac{\partial \mathcal L}{\partial W^2_{h^\star,d}} \, x_{n^\star,p_d}, & \text{if } i = i_d^\star\\[8pt]
0 & \text{otherwise.}
\end{cases}
\]
\changesto{A detailed proof of the subgradient derivation is given in appendix \ref{supp:subgrad-proof-LMM}.}{}\\
Interestingly, even after enriching the architecture with different types of neurons, the subgradient remains extremely sparse: it contains at most $C$ nonzero elements per layer, one for each class.

\begin{theorem}[Sparsity of the Subgradient]
For any LMM network trained with the maximum SCCE loss,  
the subgradient with respect to the parameter matrices contains at most $C$ nonzero elements per layer,  
where $C$ is the number of classes.
\end{theorem}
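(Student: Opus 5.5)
The plan is to compute, via the chain rule for conservative set-valued fields \citep{conservativesetvaluedfields}, the specific subgradient obtained by fixing one active index at each max/min node. Then I read off the support layer by layer from formulas (a)--(c). Since the claim concerns the subgradient produced by this selection (the output of backpropagation with fixed argmax/argmin choices), it suffices to show that this element has the stated support.

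\textbf{Outer max and softmax.} Because $\mathcal L=\max_n \text{Loss}_n$, selecting $n^\star$ gives a conservative field element equal to a conservative field element of $\text{Loss}_{n^\star}$ alone. All other samples then contribute zero, so we may work with the single sample $x_{n^\star}$. For this sample, $\text{Loss}_{n^\star}=-z_{y_{n^\star}}+\log\sum_{d}\exp(z_d)$, whose derivative with respect to the score vector $z$ is $\hat y_d-\mathbb I_{\{d=y_{n^\star}\}}$, exactly as in the derivation of \eqref{calculate_gradient_0hidden}.

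\textbf{Layer $W^2$.} Each score $z_d=\max_h(g_h+W^2_{h,d})$ depends on column $d$ of $W^2$ only. The selected element of its field is the indicator of the active entry $(h_d^\star,d)$, giving formula (a). Column $d$ therefore has at most one nonzero entry, so $W^2$ has at most $C$ nonzero entries.

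\textbf{Layer $W^1$.} Backpropagating through $z_d$, the only hidden unit that receives signal from class $d$ is $g_{h_d^\star}$. Backpropagating through the min, $g_{h}$ with the choice $i^\star_d$ sends that signal only to $W^1_{i_d^\star,h_d^\star}$ and to $\lambda_{i_d^\star}$. Summing over $d$ yields at most $C$ pairs $(i_d^\star,h_d^\star)$. If two classes share the same pair, their contributions add into a single entry, which only lowers the count. One subtlety matters here: when distinct classes share the same $h^\star$, the argmin $i^\star$ must be chosen once per hidden unit rather than once per class. Otherwise the chain rule is not respected. I will fix a single selection $i^\star(h)$ for each hidden unit, which keeps the count at most $C$.

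\textbf{Layer $W^0$ and the main obstacle.} For $W^0$, the signal reaching $\lambda_{i_d^\star}=\sum_p W^0_{i_d^\star,p}x_{n^\star,p}$ gives a gradient row $x_{n^\star,:}$. This row is dense unless the predefined sparsity pattern of $W^0$ is enforced, and handling that is the step I expect to be the main obstacle. I will invoke the structure from the linear-layer definition: only the entries $W^0_{2i-1,i}$ and $W^0_{2i,i}$ are parameters, equivalently the vector $K$. Each row $i$ then has a single free coordinate $p(i)=\lceil i/2\rceil$, and the derivative becomes $x_{n^\star,p(i)}$ times the backpropagated signal, giving formula (c). Summing over classes leaves at most $C$ nonzero parameter coordinates, one per distinct $i_d^\star$. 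The statement must accordingly be read as counting parameters of the LMM, restricted to the fixed sparsity pattern of $W^0$, rather than all entries of a dense $2P\times P$ matrix.
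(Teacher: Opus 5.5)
Your proposal is correct and follows the paper's route: fix one active index at each max/min node, backpropagate to obtain formulas (a)--(c), and read off at most one active path $(h_d^\star, i_d^\star)$ per class. Your explicit restriction of $W^0$ to its predefined sparsity pattern is exactly what the count needs, and the paper leaves it implicit: its appendix formula for $W^0$ literally produces the full row $x_{n^\star,:}$ for each active $i_d^\star$.

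One small correction: choosing $i^\star$ once per hidden unit is harmless but not required. Stacking independently selected conservative-field elements of the components $z_d$ already gives a valid conservative Jacobian of $z$, so the paper's per-class choice $i_d^\star$ also respects the chain rule, and the count is at most $C$ either way.
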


\section{Optimization Algorithms}
In order to train the Linear-Min-Max model, we are going to consider 2 algorithms: stochastic gradient descent on the average loss, sparse subgradient descent on the maximum loss.

\subsection{Stochastic gradient descent on the average loss}

This is the most natural choice and comes as a baseline for our other proposed algorithm. At each iteration, we consider a single sample, do the forward pass on this sample, which is the most computationally intensive part of each iteration, and then compute the subgradient of the loss for this sample. Note that this backward pass is very cheap thanks to the sparsity of the subgradient.

\subsection{Sparse subgradient descent on the maximum loss}

Building on Nesterov’s seminal work on subgradient methods for nonsmooth optimization, 
we extend the framework to our setting by integrating three key ingredients:  
(i) the Short Computational Tree (SCT) structure, which enables maximum-type 
operations to be updated in logarithmic time,  
(ii) Polyak’s adaptive step-size rule, which stabilizes convergence in 
nonsmooth optimization, and  
(iii) the natural sparsity of the parameters when initialized sparsely, 
which we exploit to reduce the cost of the updates. 

Formally, we consider the problem
\[
\min_{W = W^0, W^1, W^2} \mathcal L(W)  =\min_{W = W^0, W^1, W^2}\max_{1 \leq n \leq N} \text{Loss}_n(W)
\]
The sparse subgradient algorithm is then given by
\begin{align*}
&\text{Initialize } W_0 = (W_0^0, W_0^1, W_0^2)  \\
&W_{k+1} = W_k - \alpha_k \mathcal L'(W_k), 
\quad k \geq 0,
\end{align*}
where $\mathcal L'(W_k)$ denotes a sparse subgradient belonging to a conservative field, and 
\[
\alpha_k = \frac{\mathcal L(W_k)-\mathcal L^\ast}{\| \mathcal L'(W_k) \|^2}.
\]
Here, $\alpha_k$ is the Polyak step size, chosen adaptively at each iteration 
based on the current suboptimality $\mathcal L(x_k)-\mathcal L^\ast$ and the squared norm of the subgradient~\citep{polyakStepSize}, 
with $\mathcal L^\ast = \inf_W \mathcal L(W)$ denoting the optimal function value. Generally speaking, we don't know the value of $\mathcal L^*$, but 
in our setting, we target $\mathcal L^\ast = 0$, corresponding to the minimum of the 
Sparse Categorical Cross-Entropy loss.
We also consider a constant step size $\alpha_k = \alpha$ for a small value $\alpha$ in the final iterations to account for the fact that our estimation of $\mathcal L^\ast$ may be too optimistic.

\changesto{When initializing $W_0$, we perform a full forward pass and we store all the short computational trees. This requires memory but it then allows to perform the sparse updates as in \cite{nesterov_subgradient}. We can leverage sparsity in the dataset for the updates of the first layer (i.e. the $W^0$ matrix) and the sparsity of the forward pass to only update the SCTs that incur changes for the other layers.}{}

\section{Experiments}
\subsection{MLP vs. LMM on Iris}
We evaluate the proposed Linear-Min–Max (LMM) model on the Iris dataset and compare it against a standard Multi-Layer Perceptron (MLP). The data are randomly split into $70\%$ training samples and $30\%$ testing samples.
For a fair comparison, both models use the same depth and comparable width. The architecture composed of three layers with $(P,2P,H,C)$ neurons, where $P=4$ is the input dimension, $H=20$ the hidden-layer width, and $C=3$ the number of classes. All models are trained for $50{,}000$ iterations.

The training objectives differ between the two approaches.  
The MLP is trained by minimizing the average cross-entropy loss using the Adam optimizer with learning rate of $0.01$ (MLP ($\frac{1}{N}$) in Table~\ref{tab:iris_train}).  
The LMM model is trained either by minimizing the average loss using stochastic gradient descent (LMM ($\tfrac{1}{N}$)) or by minimizing the maximum loss using the proposed sparse subgradient algorithm (LMM ($\max_n$)).

Since the objectives are not identical, we report both the average loss and the maximum loss, computed over the final loss vector.

\begin{table}[h]
  \caption{\textbf{Training} performance on the Iris dataset.}
  \label{tab:iris_train}
  \begin{center}
    \begin{small}
      \begin{sc}
        \begin{tabular}{lcccr}
          \toprule
         Model& MLP ($\frac{1}{N}$)&LMM ($\frac{1}{N}$) & LMM ($\max_{n}$)  \\
          \midrule
          max loss & 1.839& 1.025 & 0.426  \\
          avg loss & 0.025 & 0.16 &  0.245 \\
          accuracy & 99\% & 98.1\% & 100\% \\
          \bottomrule
        \end{tabular}
      \end{sc}
    \end{small}
  \end{center}
  \vskip -0.1in
\end{table}
A final Max-SCCE loss of approximately $0.426$ corresponds to perfect classification under the max-loss criterion (\ref{prop:maxloss}), while maintaining moderate confidence levels. 
In contrast, the MLP attains low average loss at the cost of highly confident predictions, resulting in substantially larger maximum loss values as shown in Table~\ref{tab:iris_test}.

\begin{table}[h!]
  \caption{\textbf{Testing} performance on the Iris dataset.}
  \label{tab:iris_test}
  \begin{center}
    \begin{small}
      \begin{sc}
        \begin{tabular}{lcccr}
          \toprule
         Model& MLP ($\frac{1}{N}$)&LMM ($\frac{1}{N}$) & LMM ($\max_{n}$)  \\
          \midrule
          max loss & \textbf{8.744} & 1.253 & 1.03 \\
          avg loss & 0.047& 0.227  &  0.311 \\
          accuracy & 88\% & 91.11\%  & 93.33\%\\
          \bottomrule
        \end{tabular}
      \end{sc}
    \end{small}
  \end{center}
  \vskip -0.1in
\end{table}
Overall, the results reveal a clear qualitative difference between the two models. While the MLP achieves competitive average loss values, it does so at the cost of severe overconfidence, reflected in its large maximum loss. In contrast, the LMM maintains controlled confidence levels, yielding lower maximum loss values. This controlled expressivity is a desirable property in practice, particularly in settings where robustness and interpretability are important.

Finally, increasing the number of hidden neurons to match the number of training samples (e.g., \( H = 150 \) for Iris) leads to exact interpolation, with the maximum loss converging to zero. This result highlights the expressive power of the LMM model under sparse training, achieved without relying on dense gradient updates.

\subsection{Effect of Weight Initialization on IRIS}
 Due to the sensitivity of morphological networks to weight initialization~\cite{InfluenceofInitialization}, we analyze how different initialization strategies affect the training behavior and final performance of the LMM model. We consider three initialization strategies for the weight matrices $(W^0, W^1, W^2)$: (i) a structured initialization inspired by the theoretical construction of the LMM model in section \ref{sec:init},
    (ii) Gaussian random initialization drawn from $\mathcal{N}(0,k^2)$,
    (iii) Uniform random initialization drawn from $\mathcal{U}(-k, k)$.
Apart from the initialization scheme, all experimental conditions are kept identical.
To account for variability induced by random initialization, each configuration is repeated over 10 independent runs with different random seeds.
The distribution of final Max-SCCE values across runs is reported in the following figure.

\begin{figure}[htb]
  \begin{center}
    \centerline{\includegraphics[width=\columnwidth]{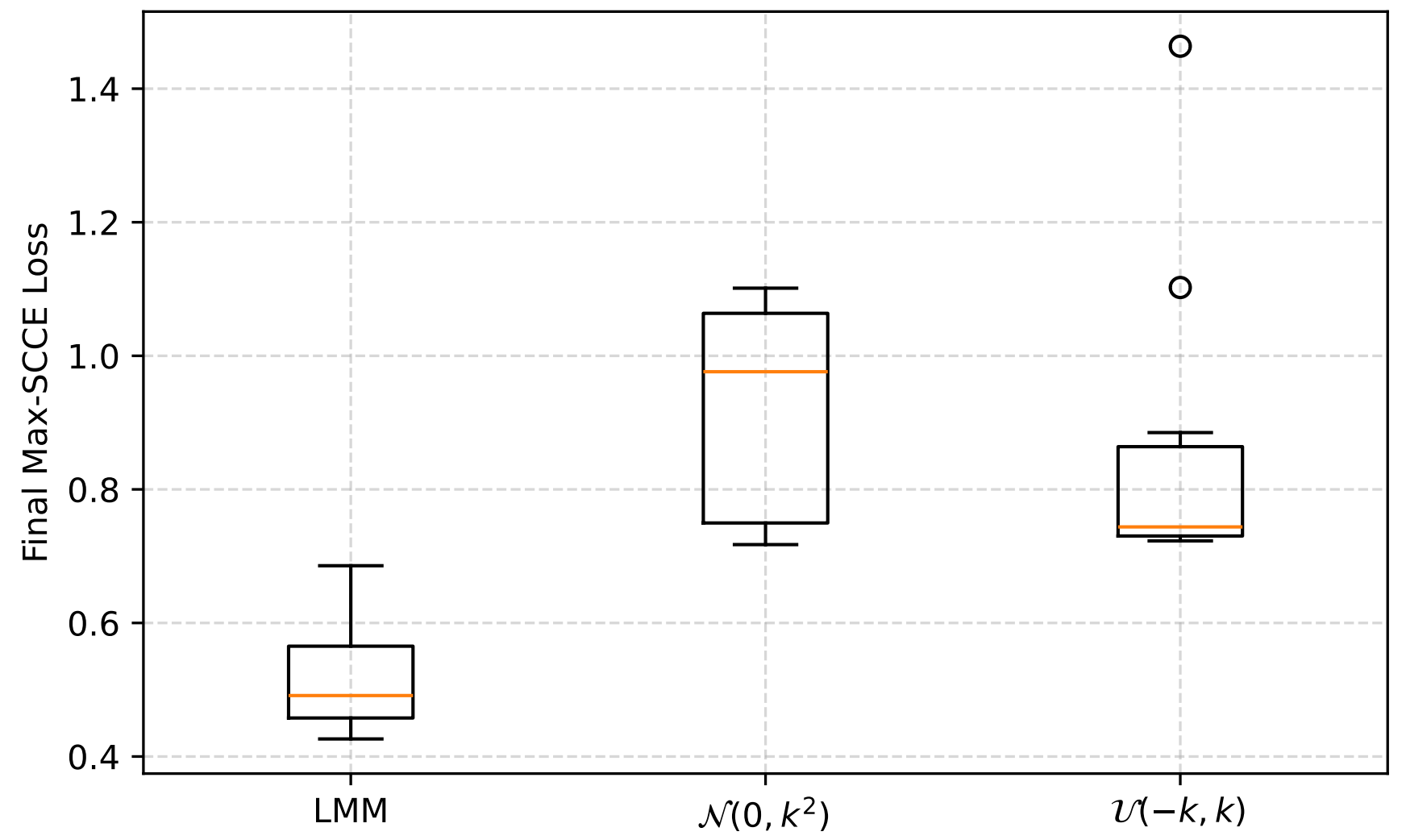}}
    \caption{Final Max-SCCE values on the IRIS dataset obtained with three different weight initialization strategies, evaluated over 10 independent runs.}
    \label{icml-historical}
  \end{center}
\end{figure}

The results show a clear gap between the structured initialization and the random alternatives. The LMM-based initialization consistently converges to lower final Max-SCCE values and exhibits markedly reduced variability across runs. Remarkably, even the worst-performing run under structured initialization attains a lower final loss than the best outcomes obtained with either Gaussian or Uniform random initialization. Although Iris is a small-scale benchmark, these finding clearly demonstrate the decisive role of theory-driven initialization in guiding sparse subgradient optimization toward favorable regions of the parameter space.

\subsection{Training the LMM on MNIST}
To assess scalability, we extend the LMM model with the proposed sparse subgradient algorithm to the MNIST dataset. 
\paragraph{Experimental setup.}
All MNIST experiments were implemented on a CPU cluster using 30 processors. All variants are evaluated under the same configuration, with $n=60{,}000$ training samples, $H=500$ hidden neurons, a skip ratio of 100 iterations for updates $W^0$, and $k=4$. 
Figure~\ref{fig:mnist_convergence} shows the convergence of the LMM model's max-SCCE loss across 200{,}000 iterations.
\begin{figure}[H]
\centering
\includegraphics[width=0.48\textwidth]{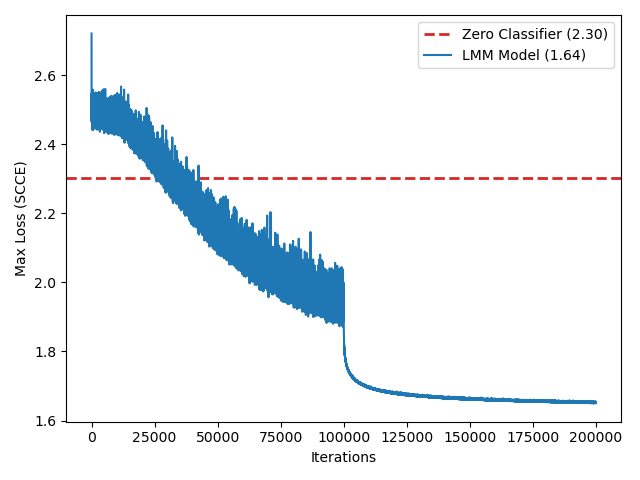}
\caption{Convergence of the LMM model on MNIST}
\label{fig:mnist_convergence}
\end{figure}
We adopt a two-phase step size strategy during training. In the first 100,000 iterations, we employ a Polyak step size, which is well known for its ability to take larger adaptive steps when far from optimality and to avoid shallow local minima in non-smooth optimization landscapes.

In the second phase (starting at iteration 100{,}000), we switch to a constant step size given by $\alpha = \frac{\varepsilon}{2\sqrt{2C}}$,
where $\varepsilon$ is a user-defined learning rate and $C$ is the number of output classes. This fixed learning rate leads to a smoother convergence profile, further reducing the Max-SCCE loss to approximately 1.64. This value significantly outperforms the zero-classifier baseline loss of $\log(C) \approx 2.30$ (red dashed line), which corresponds to uniform predictions across classes.

Quantitive training and testing results, including maximum loss, average loss, and classification accuracy, are reported in Appendix \ref{sec:mnist_accuracy}. These results highlight a clear difference in the optimization behavior: minimizing the maximum loss leads to substantially lower worst-case loss values while simultaneously improving classification accuracy on both the training and test sets. This confirms that maximum-loss minimization constitutes a more effective training objective than average-loss minimization for the LMM on large-scale datasets.

To better understand the nature of the model's predictions, we analyze the confusion matrix on the MNIST test set in Table~\ref{tab:mnist_confusion}. The resulting macro-averaged F1-score reaches $0.89$, indicating balanced classification performance despite the model’s focus on minimizing worst-case errors.

\begin{table}[htb]
\centering
\vspace{0.5\baselineskip}  
\caption{Confusion matrix of the LMM model on the MNIST test set ($88.6\%$ accuracy).}
\vspace{0.5\baselineskip}  
\label{tab:mnist_confusion}
\resizebox{0.48\textwidth}{!}{%
\begin{tabular}{c|cccccccccc}
\toprule
 & 0 & 1 & 2 & 3 & 4 & 5 & 6 & 7 & 8 & 9 \\
\midrule
0 & 922 & 0 & 8 & 5 & 3 & 26 & 5 & 2 & 5 & 4 \\
1 & 0 & 1089 & 12 & 2 & 8 & 2 & 3 & 4 & 14 & 1 \\
2 & 9 & 1 & 923 & 30 & 11 & 44 & 21 & 25 & 4 & 1 \\
3 & 7 & 3 & 31 & 849 & 0 & 64 & 0 & 9 & 30 & 17 \\
4 & 4 & 2 & 3 & 4 & 820 & 1 & 5 & 8 & 14 & 121 \\
5 & 8 & 0 & 2 & 42 & 3 & 784 & 14 & 8 & 20 & 11 \\
6 & 10 & 5 & 3 & 0 & 28 & 31 & 868 & 1 & 9 & 3 \\
7 & 2 & 4 & 21 & 16 & 6 & 2 & 0 & 930 & 4 & 43 \\
8 & 5 & 7 & 13 & 32 & 21 & 23 & 8 & 9 & 822 & 34 \\
9 & 8 & 7 & 6 & 11 & 25 & 15 & 2 & 50 & 28 & 857 \\
\bottomrule
\end{tabular}%
}
\end{table}

Taken together, these results highlight the dual strengths of the LMM architecture: (i) its ability to efficiently optimize non-smooth max-loss objectives while maintaining high accuracy, and (ii) its tendency to produce moderate, well-distributed confidence scores rather than overconfident predictions (\ref{confidence-histogram}), a property that is particularly desirable in applications where robustness and uncertainty awareness are as important as achieving high accuracy.

\subsection{Sparse vs. Dense Updates: Computational Cost per Iteration}

A central motivation for the proposed training procedure is to reduce the computational cost of optimizing LMM models by exploiting the sparsity of the subgradient updates. To quantify the practical impact of this design choice, we compare the average time per training iteration under three update strategies: (i) sparse updates, (ii) sparse updates with skipping updates of the input layer parameters $W^0$, and (iii) dense updates.


To focus on runtime rather than final convergence, each method is run for $300$ iterations, and the average time per iteration (in seconds per iteration) is reported. For the sparse and sparse with periodically skip $W^0$ variants, longer runs were feasible and yielded consistent per-iteration costs. In contrast, extending the dense update strategy beyond a few hundred iterations would incur prohibitive computational cost, and we therefore restrict the dense baseline to the same $300$-iteration budget for a fair comparison of per-iteration complexity.
\begin{table}[htb]
\centering
\caption{Average computational time per iteration on MNIST.}
\label{tab:iteration_time}
\begin{tabular}{lrrr}
\toprule
\textbf{Update mode} & \textbf{Time (s)} & \textbf{s / iter} &\textbf{Mem. (GB)}\!\! \\
\midrule
Sparse + skip $W^0$ & 36  &  0.12 & 513\\
Sparse              & 1045   & 3.48 & 513\\
Dense               & 5688   & 18.96 & 10\\
\bottomrule
\end{tabular}
\end{table}

The results in Table~\ref{tab:iteration_time} demonstrate a substantial computational advantage from exploiting sparsity. Skipping some updates of $W^0$ reduces the average time per iteration from $3.48$ seconds to $0.12$ seconds, corresponding to a speed-up of approximately $29\times$ relative to standard sparse updates. Even without skipping $W^0$, sparse updates already yield a significant improvement over dense updates, reducing the per-iteration cost by a factor of about $5.5\times$.

Importantly, we observe no degradation in predictive performance when periodically skipping updates of $W^0$. This indicates that a large fraction of the computational overhead in the sparse variant arises from maintaining and updating quantities associated with the input layer, and that selectively freezing these parameters can yield substantial efficiency gains without compromising accuracy.

\subsection{Limitations and Challenges}

Although we have proved that the LMM model is trainable and demonstrates desirable properties, several limitations remain. 
First, the training time is significantly longer than that of standard MLPs. For example, on the Iris dataset, training the LMM model requires $151$ seconds, compared to $33$ seconds for MLP under the experimental conditions explained in the previous sections. 
This gap is expected: widely used deep learning frameworks such as PyTorch and TensorFlow have undergone years of optimization, while our implementation is an initial prototype. 
Improving computational efficiency and enabling GPU acceleration therefore constitute important directions for future work.

Second, the memory requirements of LMM models are higher, due to the explicit representation of Short Computational Trees (SCTs). 
We plan to study stochastic alternatives to mitigate this effect. The main challenge is to find estimates of the maximum loss that have a low bias and that can be updated at a low cost.

\subsection{Conclusion}

Our experiments demonstrate that LMM models can be trained effectively using the sparse subgradient algorithm, combining theoretical soundness with strong empirical performance. 
On MNIST, the model achieves $92\%$ classification accuracy while optimizing the worst-case loss. 
This confirms that the training procedure is effective both from an optimization perspective, with a consistent decrease of the objective and from a machine learning perspective with competitive predictive accuracy.
Compared to standard MLPs, LMM networks exhibit less overconfidence, as evidenced by the distribution of prediction confidences. 
This combination of interpretability, robustness, and cautious predictions highlights their potential for safety-critical applications, particularly in medical domains where the cost of overconfident mistakes is unacceptable.

\bibliographystyle{icml2026}
\bibliography{paper}

\begin{thebibliography}{15}
\providecommand{\natexlab}[1]{#1}
\providecommand{\url}[1]{\texttt{#1}}
\expandafter\ifx\csname urlstyle\endcsname\relax
  \providecommand{\doi}[1]{doi: #1}\else
  \providecommand{\doi}{doi: \begingroup \urlstyle{rm}\Url}\fi

\bibitem[Bolte \& Pauwels(2021)Bolte and Pauwels]{conservativesetvaluedfields}
Bolte, J. and Pauwels, E.
\newblock Conservative set valued fields, automatic differentiation, stochastic gradient methods and deep learning.
\newblock \emph{Mathematical Programming}, 188\penalty0 (1):\penalty0 19--51, 2021.

\bibitem[Daniely et~al.(2023)Daniely, Srebro, and Vardi]{computationalcomplexitylearningneural}
Daniely, A., Srebro, N., and Vardi, G.
\newblock Computational complexity of learning neural networks: Smoothness and degeneracy.
\newblock \emph{Advances in Neural Information Processing Systems}, 36:\penalty0 76272--76297, 2023.

\bibitem[Dimitriadis \& Maragos(2021)Dimitriadis and Maragos]{morphological_NN}
Dimitriadis, N. and Maragos, P.
\newblock Advances in morphological neural networks: Training, pruning and enforcing shape constraints.
\newblock \emph{ICASSP 2021 - 2021 IEEE International Conference on Acoustics, Speech and Signal Processing (ICASSP)}, pp.\  3825--3829, 2021.

\bibitem[Dimitrova et~al.(2025)Dimitrova, Blusseau, and Velasco-Forero]{InfluenceofInitialization}
Dimitrova, M., Blusseau, S., and Velasco-Forero, S.
\newblock Learning morphological representations of image transformations: Influence of initialization and layer differentiability.
\newblock 2025.

\bibitem[Glorot \& Bengio(2010)Glorot and Bengio]{glorot2010understanding}
Glorot, X. and Bengio, Y.
\newblock Understanding the difficulty of training deep feedforward neural networks.
\newblock In \emph{Proceedings of the thirteenth international conference on artificial intelligence and statistics}, pp.\  249--256. JMLR Workshop and Conference Proceedings, 2010.

\bibitem[LeCun et~al.(2002)LeCun, Bottou, Bengio, and Haffner]{lecun2002mnist}
LeCun, Y., Bottou, L., Bengio, Y., and Haffner, P.
\newblock Gradient-based learning applied to document recognition.
\newblock \emph{Proceedings of the IEEE}, 86\penalty0 (11):\penalty0 2278--2324, 2002.

\bibitem[LeCun et~al.(2015)LeCun, Bengio, and Hinton]{deepLearning}
LeCun, Y., Bengio, Y., and Hinton, G.
\newblock Deep learning.
\newblock \emph{Nature}, 521\penalty0 (7553):\penalty0 436--444, 2015.

\bibitem[Livni et~al.(2014)Livni, Shalev-Shwartz, and Shamir]{dense_NN_computationalCost}
Livni, R., Shalev-Shwartz, S., and Shamir, O.
\newblock On the computational efficiency of training neural networks.
\newblock In Ghahramani, Z., Welling, M., Cortes, C., Lawrence, N., and Weinberger, K. (eds.), \emph{Advances in Neural Information Processing Systems}, volume~27. Curran Associates, Inc., 2014.

\bibitem[Loizou et~al.(2021)Loizou, Vaswani, Laradji, and Lacoste-Julien]{polyakStepSize}
Loizou, N., Vaswani, S., Laradji, I.~H., and Lacoste-Julien, S.
\newblock Stochastic polyak step-size for sgd: An adaptive learning rate for fast convergence.
\newblock In \emph{International Conference on Artificial Intelligence and Statistics}, pp.\  1306--1314. PMLR, 2021.

\bibitem[Luo \& Fan(2021)Luo and Fan]{minmaxplus}
Luo, Y. and Fan, S.
\newblock Min-max-plus neural networks.
\newblock \emph{preprint arXiv:2102.06358}, 2021.

\bibitem[Mondal et~al.(2019)Mondal, Mukherjee, Santra, and Chanda]{morphological_activation}
Mondal, R., Mukherjee, S.~S., Santra, S., and Chanda, B.
\newblock Morphological network: How far can we go with morphological neurons?
\newblock In \emph{British Machine Vision Conference}, 2019.

\bibitem[Nesterov(2014)]{nesterov_subgradient}
Nesterov, Y.
\newblock Subgradient methods for huge-scale optimization problems.
\newblock \emph{Mathematical Programming}, 146\penalty0 (1-2):\penalty0 275--297, 2014.

\bibitem[Sra et~al.(2012)Sra, Nowozin, and Wright]{denseGradient}
Sra, S., Nowozin, S., and Wright, S.~J. (eds.).
\newblock \emph{Optimization for Machine Learning}.
\newblock MIT Press, Cambridge, MA, 2012.

\bibitem[Tsiamis \& Maragos(2019)Tsiamis and Maragos]{sparsity_maxplus_structure}
Tsiamis, A. and Maragos, P.
\newblock Sparsity in max-plus algebra and systems.
\newblock \emph{Discrete Event Dynamic Systems}, 29\penalty0 (1):\penalty0 163--189, 2019.

\bibitem[Zhang et~al.(2019)Zhang, Blusseau, Velasco-Forero, Bloch, and Angulo]{filter_selection}
Zhang, Y., Blusseau, S., Velasco-Forero, S., Bloch, I., and Angulo, J.
\newblock Max-plus operators applied to filter selection and model pruning in neural networks.
\newblock 2019.

\end{thebibliography}

\newpage
\appendix
\onecolumn
\section{Appendix}

\subsection{Proof of Theorem~\ref{thm:universal_approx} with an explicit construction of the sequence of parameters}

\label{ProofThmUniversalApprox}


\begin{proof}
\begin{enumerate}

    \item 
 \textbf{Since $f$ is $K$–Lipschitz under $\|\cdot\|_\infty$, we have}
\[
\forall x,y \in X, \qquad |f(x) - f(y)| \le K \|x-y\|_\infty.
\]

\item \textbf{Define the linear transformation} $\lambda(x)$ as
\[
\lambda(x) = [Kx_1, \, -Kx_1, \, \ldots, \, Kx_p, \, -Kx_p] \in \mathbb{R}^{2p},
\]
that is,
\[
\lambda : \mathbb{R}^p \to \mathbb{R}^{2p}.
\]

By using both sides ($\pm Kx_j$), we ensure that the next layer of (min,+) can shape a pyramid centered at any point, which is crucial for approximating Lipschitz functions.

\item \textbf{Define $(\min,+)$ functions $g_i(x)$, one for each neuron $i$}, as
\[
g_i(x) = \min_{j=1}^{2p} \big( \lambda_j(x) + a_{ij} \big),
\]
where $a_i \in \mathbb{R}^{2p}$ is a vector of learnable biases (one per neuron).  

Each $g_i$ is a concave piecewise linear function. It has a peak at some point $x^i$, provided we set $a_{ij}$ correctly. In this way, $g_i$ defines a pyramid function centered at the chosen grid point $x^i$.

\item  \textbf{Constructing the pyramid at grid points:} \\
Let $f : \mathbb{R}^p \to \mathbb{R}$ be the target Lipschitz function.  
Define a set of grid points 
\[
\mathcal{M} = \{ x^1, x^2, \ldots, x^m \} \subset \mathbb{R}^p,
\]
where we know the values of $f(x^i)$.  

The grid spacing is denoted by $\delta > 0$, and we let $\delta \to 0$.  

We assume that every $x \in X$ is close to some grid point $x^i$. At each grid point $x^i$, we want to construct a $(\min,+)$ function $g_i(x)$ such that $g_i(x) = \min_{j=1}^{2d} (\lambda_j (x) + a_{ij})$ and satisfying 
\[
g_i(x^i) = f(x^i).
\]
This ensures that the pyramid touches the function $f$ at its center.
\begin{enumerate}
    \item Construction of biases $a_i \in \mathbb{R}^{2p}$ for fixed grid points $x^i \in \mathbb{R}^p$

We define the biases as
\[
a_{ij} := -\lambda_j(x^i) + f(x^i).
\]
With this choice,
\[
g_i(x) = \min_{j=1}^{2p} \big( \lambda_j(x) + a_{ij} \big)
       = \min_{j=1}^{2p} \big( \lambda_j(x) - \lambda_j(x^i) + f(x^i) \big).
\]
In particular, at $x = x^i$ we obtain
\[
g_i(x^i) = f(x^i).
\]
Thus, this definition of $a_{ij}$ ensures that the pyramid function $g_i(x)$ touches $f(x)$ at the grid point $x^i$ with value exactly equal to $f(x^i)$.

\medskip
Since $\lambda(x) = [Kx_1, -Kx_1, \ldots, Kx_d, -Kx_d]$, we have
\[
\lambda_j(x) = Kx_j, \quad j=1,\ldots,p,
\qquad
\lambda_j(x) = -Kx_{j-p}, \quad j=p+1,\ldots,2p.
\]
Hence
\[
a_{ij} = -Kx^i_j + f(x^i), \quad j=1,\ldots,p,
\qquad
a_{ij} = Kx^i_{j-d} + f(x^i), \quad j=p+1,\ldots,2p.
\]

Substituting into $g_i(x)$ gives
\[
g_i(x) = \min_{j=1}^{2p} \big( \lambda_j(x) + a_{ij} \big)
       = \min \Big\{ \min_{j=1,\ldots,p} \big( K(x_j - x^i_j) + f(x^i) \big), \;
                     \min_{j=p+1,\ldots,2p} \big( -K(x_{j-p} - x^i_{j-p}) + f(x^i) \big) \Big\}.
\]
Therefore,
\[
g_i(x) = f(x^i) - K \| x - x^i \|_\infty.
\]

\medskip
This shows that $g_i(x)$ is a tent (or pyramid) function, i.e. an absolute-value cone flipped and shifted upward:
\begin{enumerate}
    \item  it has a peak at $x^i$,  
\item  it reaches the height $f(x^i)$,  
\item  it slopes down on both sides with slope $\pm K$.  
\end{enumerate}
\medskip

At each $x^i \in \mathcal{M}$ we build a pyramid $g_i(x)$ such that $g_i(x^i) = f(x^i)$.  
This construction forces the approximation to match the true value of $f$ at the grid points.

\end{enumerate}

\item  \textbf{Constructing the final approximation} $h(x)$\\

We have built a family of tent (pyramid) functions $g_i(x)$, each one centered at a grid point $x^i \in \mathcal{M}$ and satisfying
\[
g_i(x^i) = f(x^i).
\]

Now we define the approximation function $h(x)$ as the maximum over all these pyramids:
\[
h(x) = \max_{j=1,\ldots,2p} g_j(x).
\]

We take the maximum because each pyramid $g_i(x)$ is localized: it approximates $f$ well only near its center $x^i$. By taking the maximum over all pyramids, we combine their strengths.  
\begin{itemize}
    \item Each point $x$ is close to some grid center $x^i$.  
\item The closest pyramid provides the best local estimate of $f(x)$.  
\item The maximum ensures that $h(x)$ selects the highest (best) local approximation at each point.  
\end{itemize}

\medskip
\noindent\textbf{At a grid point $x^i$:}  
\[
h(x^i) = \max_{j=1,\ldots,2p} g_j(x^i) 
      = g_i(x^i) = f(x^i), \qquad \text{for } j=i, \tag{1}
\]
and 
\[
g_j(x^i) \leq f(x^i), \qquad \forall j \neq i. \tag{2}
\]

\medskip
\noindent\textbf{Verification.}  
Since $f$ is $K$–Lipschitz,
\[
|f(x^j) - f(x^i)| \le K \|x^j - x^i\|_\infty.
\]
In particular,
\[
f(x^j) \le f(x^i) + K \|x^j - x^i\|_\infty. 
\]
By construction,
\[
g_j(x^i) = -K \|x^j - x^i\|_\infty + f(x^j).
\]
Combining with (1) gives
\[
g_j(x^i) \le -K \|x^j - x^i\|_\infty + f(x^i) + K \|x^j - x^i\|_\infty
           = f(x^i).
\]
Hence $g_j(x^i) \le f(x^i)$ for all $j \ne i$.

\medskip
\noindent\textbf{Relation between norms.}  
For all $v \in \mathbb{R}^n$,
\[
\|v\|_\infty \;\le\; \|v\|_1 \;\le\; n \|v\|_\infty.
\]
Thus, if $f$ is Lipschitz continuous under $\|\cdot\|_\infty$, it is also Lipschitz continuous under $\|\cdot\|_1$ (and vice versa), with the same constant up to a factor depending only on $n$.

Thus the upper envelope of all pyramids satisfies
\[
h(x^i) = \max_j g_j(x^i) = g_i(x^i) = f(x^i), \quad \forall x^i \in \mathcal{X}.
\]

This means that the approximation $h(x)$ matches the function $f(x)$ exactly at every grid point.

\item \textbf{Approximation error outside the grid points
}\\
We know that the approximation $h(x)$ matches the function $f(x)$ exactly at the grid points.  
Now we want to show that even outside the grid points, our approximation $h(x)$ remains very close to the true $f(x)$.

\medskip
\noindent\textbf{Step 6.1: Lipschitz control.}  
Since $f$ is $K$–Lipschitz, for any $x,x^i \in X$ we have
\[
|f(x^i) - f(x)| \le K \| x^i - x \|_\infty. 
\]

\medskip
\noindent\textbf{Step 6.2: Error bound near grid points.}  
Fix $x \in X$, and let $x^i$ be the closest grid point to $x$, so that
\[
\|x - x^i\|_\infty \le \delta.
\]

Consider the pyramid function centered at $x^i$:
\[
g(x) = -K \|x - x^i\|_\infty + f(x^i).
\]

Since $h(x) = \max_j g_j(x)$, we have
\[
h(x) \ge g(x).
\]

Therefore,
\[
h(x) - f(x) \;\ge\; g(x) - f(x)
               = f(x^i) - K \|x - x^i\|_\infty - f(x).
\]

Using the Lipschitz inequality (A),
\[
f(x^i) - f(x) \;\ge\; -K \|x^i - x\|_\infty,
\]
hence
\[
h(x) - f(x) \;\ge\; -K \|x - x^i\|_\infty - K \|x - x^i\|_\infty
                 = -2K \|x - x^i\|_\infty
                 \ge -2K\delta. 
\]

\medskip
\noindent\textbf{Step 6.3: Upper bound.}  
We also know that $h(x) \le f(x)$, since each pyramid lies below $f$ by construction.  
Thus,
\[
-2K\delta \;\le\; h(x) - f(x) \;\le\; 0.
\]

Equivalently,
\[
|h(x) - f(x)| \le 2K\delta.
\]

\medskip
\noindent\textbf{Conclusion.}  
The approximation $h(x)$ equals $f(x)$ exactly on the grid points, and differs from $f(x)$ by at most $2K\delta$ everywhere else.  
As the grid spacing $\delta \to 0$, we obtain
\[
\sup_{x \in X} |h(x) - f(x)| \to 0,
\]

which establishes the uniform convergence.
\end{enumerate}
\end{proof}

\subsection{Extension to classification}
\label{supp:pf-vs-exp}
In the detailed proof of Theorem~\ref{thm:universal_approx} the construction is given for scalar-valued functions 
\(f : \mathbb{R}^p \to \mathbb{R}\). 
There, each pyramid function is defined as
\[
g_i(x) = \min_j \big( \lambda_j(x) + a_{ij} \big),
\qquad 
h(x) = \max_i g_i(x),
\]
which provides a scalar approximation of \(f\).  

In our experimental setting, however, we deal with vector-valued functions 
\(f : \mathbb{R}^p \to \mathbb{R}^C\), corresponding to the $C$ output classes in a multi-class problem.  
A direct extension of the proof would require building \(C \times m\) pyramids, one per class and per grid point, followed by an additional output layer. This quickly becomes prohibitive as both $C$ (number of classes) and $m$ (number of samples) increase.

Instead, we adopt a more efficient strategy inspired by the proof but tailored to classification.  
We first construct a single bank of $m$ shared pyramids \(g_j(x)\), exactly as in the scalar case, each centered at a grid point $x^j$.  
At a training point $x^i$ with true label $y_i$, these pyramids satisfy
\[
g_j(x^i) = \mathbf{1}_{ij},
\]
so that each pyramid encodes its center.  

To obtain class-specific scores, we reuse these shared pyramids but apply simple constant shifts controlled by a confidence parameter $k>0$.  
For each class $d \in \{1,\dots,d\}$, we define
\[
h^d(x) = \max_j \Big( g_j(x) + k \,(2 \cdot \mathbf{1}_{\{d=y_j\}} - 1) \Big),
\]
which means that if $d$ is the true class for a sample $x^j$, the score is boosted by $+k$, whereas if $d \neq y_j$, the score is penalized by $-k$.  

At a grid point $x^i$, this construction guarantees that
\[
h^{y_i}(x^i) \ge k, 
\qquad 
h^d(x^i) \le -k \quad \text{for all } d \ne y_i,
\]
so that the predicted class is exactly the true label: 
\(\arg\max_d h^d(x^i) = y_i\).  

This modification is crucial: rather than duplicating $m$ pyramids for each of the $C$ classes (totaling $C \times m$ neurons), we only need $m$ shared pyramids plus $C$ class-specific shifts.  
The resulting complexity is therefore $C+m$, a dramatic reduction that makes the method scalable while preserving the constructive spirit of the proof.  
In particular, the experimental model inherits the geometric intuition of the theoretical construction, pyramids centered at the samples, but achieves it with far fewer operations.

\subsection{Proof of the proposition}
\begin{proof}
\label{proposition_proof}
Let $\{(x_n,y_n)\}_{n=1}^N$ be a labeled dataset with $y_n\in\{1,\dots,C\}$, and suppose the model produces logits $Z_d(x_n)$ for each class $d$.  
The softmax probabilities are
\[
\hat{Y}_{d}(x_n)=\frac{\exp(Z_{d}(x_n))}{\sum_{d'=1}^{C}\exp(Z_{d'}(x_n))},\]
\[
\mathcal{L}(x_n,y_n)= -\log \hat{Y}_{y_n}(x_n)
\]

is the sparse categorical cross-entropy loss.

Assume, for contradiction, that the model does not achieve perfect classification accuracy.
Then, there exists a sample $x_n$ and an incorrect class $d' \ne y_n$ such that $Z_{d'}(x_n) \ge Z_{y_n}(x_n)$ and thus $e^{Z_{d'}(x_n)} \ge e^{Z_{y_n}(x_n)}$.

Therefore, the softmax denominator satisfies:
\[
\sum_{d=1}^C e^{Z_d(x_n)} \ge e^{Z_{y_n}(x_n)} + e^{Z_{d'}(x_n)} \ge 2e^{Z_{y_n}(x_n)}.
\]

Hence, the predicted probability for the true class is:
\[
\hat{Y}_{y_n}(x_n) = \frac{e^{Z_{y_n}(x_n)}}{\sum_{d=1}^C e^{Z_d(x_n)}}
\le \frac{e^{Z_{y_n}(x_n)}}{2e^{Z_{y_n}(x_n)}} = \frac{1}{2}.
\]

Taking the negative logarithm gives:
\[
\mathcal{L}(x_n, y_n) = -\log \hat{Y}_{y_n}(x_n) \ge \log 2.
\]
whch contradicts $\max_n \mathcal{L} < \log 2$.

Therefore, for every training sample we must have
\[
\arg\max_d Z_d(x_n) = y_n,
\]
which implies the model predicts every training label correctly,  
i.e., the training accuracy is $100\%$.
\end{proof}
\subsection{Proof of the Subgradient Formula for 0-hidden layer}
\label{supp:subgrad-proof}
\begin{proof}

The sparse categorical cross entropy loss is given by: \\
\[
\text{Loss}_n(W) = -\log\left(\hat{Y}_{n, y_{n}}\right)
\]
where after replacing $\hat{Y}_{n, y_{n}}$ by its formula \eqref{formula_Yhat_0-hidden}, we get: 
\[
\text{Loss}_n(W)= -\log\left(
  \frac{
    \exp\!\Big(\max_{p}\big(X_{n,p} + W_{d,p}\big)\Big)
  }{
    \sum_{d=0}^{9} \exp\!\Big(\max_{p}\big(X_{n,p} + W_{d,p}\big)\Big)
  }
\right)
\]\\
We wish to calculate a subgradient of
\[
\mathcal L(W) = \max_n  \text{Loss}_n(W) = \max_n \left( -\max_p (X_{n,p} + W_{y_{n},p}) + \log \sum_{d=0}^{9} \exp \left( \max_p (X_{n,p} + W_{d,p}) \right) \right)
\]
Let us denote
\[n^* = \arg\max_n \left( -\max_p (X_{n,p} + W_{y_{n},p}) + \log \sum_{d=0}^{9} \exp \left( \max_p (X_{n,p} + W_{d,p}) \right) \right), \] 
\[
p^*(n,d) = \arg\max_p X_{n,p} + W_{d,p}
\]
and $\mathbb I(e) = \begin{cases} 1 & \text{ if $e$ is true,} \\
0 & \text{ if $e$ is false.}\end{cases}$

\begin{align*}
\frac{\partial \mathcal L}{\partial W_{i,j}}(W) &= \frac{\partial}{\partial W_{i,j}} \left[ \max_n \left( -\max_p (X_{n,p} + X_{y_{n},p}) + \log \sum_{d=0}^{9} \exp \left( \max_p (X_{n,p} + W_{d,p}) \right) \right) \right] \\
&= \frac{\partial}{\partial W_{i,j}} \left[ -\max_p \left( X_{n^*,p} + W_{y_{n^*},p} \right) \right] + \frac{\partial}{\partial W_{i,j}} \left[ \log \sum_{d=0}^{9} \exp \left( \max_p X_{n^*,p} + W_{d,p} \right) \right] \\
&= -\frac{\partial}{\partial W_{i,j}} \left[ X_{n^*,p^*(n^*, y_{n^*})} + W_{y_{n^*},p^*(n^*, y_{n^*})}  \right] + \\
&\quad \frac{\sum_{d'=0}^{9} 
\dfrac{\partial}{\partial x_{i,j}} \left[ \max_p(X_{n^*,p}+W_{d',p})\right]* \exp{(X_{n^*, p^*(n^*,d')} + W_{d',p^*(n^*,d')})}}{\sum_{d=0}^{9} \exp \left( X_{n^*,p^*(n^*,d)} + W_{d,p^*(n^*,d)} \right)} \\
&= -\mathbb{I}(i = y_{n^*}) \mathbb{I}(j = p^*(n^*, y_{n^*}))  + \\
&\quad \frac{\sum_{d'=0}^{9} \dfrac{\partial}{\partial W_{i,j}} \left[ X_{n^*,p(n^*,d')}+W_{d',p(n^*,d')})\right] * \exp{(X_{n^*, p^*(n^*,d')} + W_{d',p^*(n^*,d')})}}{\sum_{d=0}^{9} \exp \left( X_{n^*,p^*(n^*,d)} + W_{d,p^*(n^*,d)} \right)} \\
&= -\mathbb{I}(i = y_{n^*}) \mathbb{I}(j = p^*(n^*, y_{n^*}))  + \\
&\quad \frac{\sum_{d'=0}^{9} \mathbb I(i=d') \mathbb I (j = p^*(n^*,d')) * \exp{(X_{n^*, p^*(n^*,d')} + W_{d',p^*(n^*,d')})}}{\sum_{d=0}^{9} \exp \left( X_{n^*,p^*(n^*,d)} + W_{d,p^*(n^*,d)} \right)} \\
&= -\mathbb{I}(i = y_{n^*}) \mathbb{I}(j = p^*(n^*, y_{n^*})) + \\
&\quad  \frac{\exp \left( X_{n^*,p^*(n^*,i)} + W_{i,p^*(n^*,i)} \right) \mathbb{I}(j=p^*(n^*,i))}{\sum_{d=0}^{9} \exp \left( X_{n^*,p^*(n^*,d)} + W_{d,p^*(n^*,d)} \right)}
\end{align*}

In total, the sparse subgradient matrix will have 10 nonzeros elements: for each $i \in \{0, \ldots, 9\}$,
the element $(i, p^*(n^*, i))$ and the element $(y_{n^*}, p^*(n^*, y_{n^*}))$, which is already in the previous list.
\end{proof}

\subsection{Proof of the Subgradient Formula for the LMM Model}
\label{supp:subgrad-proof-LMM}

\begin{proof}

Let $\text{Loss}_n(W) = -\log (\hat{Y}_{y_n}(x_n))$ be the sparse categorical cross-entropy loss, where the predicted probability is
\[
\hat{Y}_{d}(x) = 
\frac{
\exp(Z_d(x))
}{
\sum_{d'=1}^{C} \exp(Z_{d'}(x))
}
\]
and
\[
Z_d(x) = \max_{h} \left( g_h(x) + W^2_{h,d} \right), \quad
g_h(x) = \min_{i} \left( \lambda_i(x) + W^1_{i,h} \right), \quad
\lambda(x) = W^0 x.
\]

We define the training objective:
\[
\mathcal L(W) = \max_n \text{Loss}_n(W) = \max_n \left( -\log \hat{Y}_{y_n}(x_n) \right).
\]

Let $n^\star \in \arg\max_n \text{Loss}_n(W)$ be a worst-case sample.  
We define:
\[
h_d^\star \in \arg\max_h \left( g_h(x_{n^\star}) + W^2_{h,d} \right),
\quad
i_d^\star \in \arg\min_i \left( \lambda_i(x_{n^\star}) + W^1_{i,h_d^\star} \right),
\]
and use the shorthand $\mathbb{I}(e) = 1$ if $e$ is true, 0 otherwise.

We now expand the loss function fully:
\[
\text{Loss}_{n^\star}(W) 
= -\log \left( \frac{ \exp(Z_{y_{n^\star}}(x_{n^\star})) }{ \sum_{d=1}^C \exp(Z_d(x_{n^\star})) } \right)
= -Z_{y_{n^\star}}(x_{n^\star}) + \log \sum_{d=1}^C \exp(Z_d(x_{n^\star})).
\]

So:
\[
\mathcal{L}(W) = -Z_{y_{n^\star}}(x_{n^\star}) + \log \sum_{d=1}^{C} \exp(Z_d(x_{n^\star})).
\]

\paragraph{(a) Subgradient with respect to \(W^2\):}

We compute:
\begin{align*}
\frac{\partial \mathcal{L}}{\partial W^2_{h,d}} 
&= \frac{\partial}{\partial W^2_{h,d}} \left[ -Z_{y_{n^\star}}(x_{n^\star}) + \log \sum_{d'=1}^{C} \exp(Z_{d'}(x_{n^\star})) \right] \\
&= - \frac{\partial Z_{y_{n^\star}}(x_{n^\star})}{\partial W^2_{h,d}} + \frac{\partial}{\partial W^2_{h,d}} \left[ \log \sum_{d'=1}^C \exp(Z_{d'}(x_{n^\star})) \right] \\
&= - \mathbb{I}(d = y_{n^\star}) \cdot \mathbb{I}(h = h^\star_d)
\;\; +\;\;
\frac{\exp(Z_d(x_{n^\star}))}{\sum_{d'} \exp(Z_{d'}(x_{n^\star}))} \cdot \mathbb{I}(h = h^\star_d) \\
&= \mathbb{I}(h = h^\star_d) \cdot \left( \hat{Y}_d(x_{n^\star}) - \mathbb{I}(d = y_{n^\star}) \right)
\end{align*}

Hence,
\[
\frac{\partial \mathcal{L}}{\partial W^2_{h,d}} =
\mathbb{I}(h = h_d^\star) \cdot \left( \hat{Y}_d(x_{n^\star}) - \mathbb{I}(d = y_{n^\star}) \right)
\]

\paragraph{(b) Subgradient with respect to \(W^1\):}

We now apply the chain rule through
\[
g_h(x) = \min_i \bigl(\lambda_i(x) + W^1_{i,h}\bigr).
\]

We compute:
\[
\frac{\partial \mathcal{L}}{\partial W^1_{i,h}} 
= \sum_{d=1}^{C} 
\frac{\partial \mathcal{L}}{\partial Z_d(x_{n^\star})}
\cdot \frac{\partial Z_d(x_{n^\star})}{\partial g_h(x_{n^\star})}
\cdot \frac{\partial g_h(x_{n^\star})}{\partial W^1_{i,h}}.
\]

First, using the softmax–cross-entropy expression, we have
\[
\frac{\partial \mathcal{L}}{\partial Z_d(x_{n^\star})} 
= \hat{Y}_d(x_{n^\star}) - \mathbb{I}_{\{d = y_{n^\star}\}}.
\]

Second, from the definition 
\(Z_d(x)=\max_h(g_h(x)+W^2_{h,d})\), we obtain
\[
\frac{\partial Z_d(x_{n^\star})}{\partial g_h(x_{n^\star})} 
= \mathbb{I}_{\{h = h_d^\star\}}.
\]

Third, from \(g_h(x)=\min_i(\lambda_i(x)+W^1_{i,h})\), we have
\[
\frac{\partial g_h(x_{n^\star})}{\partial W^1_{i,h}} 
= \mathbb{I}_{\{i = i_d^\star\}}.
\]

Combining these three expressions yields
\begin{align*}
\frac{\partial \mathcal{L}}{\partial W^1_{i,h}}
&= \sum_{d=1}^{C}
\left[
\left(\hat{Y}_d(x_{n^\star})-\mathbb{I}_{\{d=y_{n^\star}\}}\right)
\mathbb{I}_{\{h = h_d^\star\}} \mathbb{I}_{\{i = i_d^\star\}}
\right].
\end{align*}

Equivalently,
\[
\frac{\partial \mathcal{L}}{\partial W^1[i,h]} =
\begin{cases}
\hat{Y}_d(x_{n^\star})-\mathbb{I}_{\{d=y_{n^\star}\}}
\quad \text{for some $d$ with $h=h_d^\star$ and $i=i_d^\star$,}\\[4pt]
0,\quad \text{otherwise.}
\end{cases}
\]

In particular, because 
\(\frac{\partial \mathcal{L}}{\partial W^2[h,d]}=
\mathbb{I}_{\{h=h_d^\star\}}
(\hat{Y}_d(x_{n^\star})-\mathbb{I}_{\{d=y_{n^\star}\}})\),
we can also write
\[
\frac{\partial \mathcal{L}}{\partial W^1[i,h]} =
\sum_{d=1}^{C}\mathbb{I}_{\{h=h_d^\star\}}\mathbb{I}_{\{i=i_d^\star\}}
\frac{\partial \mathcal{L}}{\partial W^2[h,d]}
\]

\paragraph{(c) Subgradient with respect to \(W^0\):}

We apply the full chain rule:
\[
\frac{\partial \mathcal{L}}{\partial W^0[i,p]} =
\sum_{d=1}^{C}
\frac{\partial \mathcal{L}}{\partial Z_d(x_{n^\star})}
\cdot \frac{\partial Z_d(x_{n^\star})}{\partial g_{h_d^\star}(x_{n^\star})}
\cdot \frac{\partial g_{h_d^\star}(x_{n^\star})}{\partial \lambda_i(x_{n^\star})}
\cdot \frac{\partial \lambda_i(x_{n^\star})}{\partial W^0[i,p]}
\]

We know:
\[
\frac{\partial \mathcal{L}}{\partial Z_d(x)} 
= \frac{\partial \mathcal{L}}{\partial W^2[h_d^\star,d]}
\qquad\qquad
\frac{\partial Z_d}{\partial g_{h_d^\star}} = 1
\]

\[
\frac{\partial g_{h_d^\star}}{\partial \lambda_i} = \mathbb{I}(i = i_d^\star)
\qquad\qquad
\frac{\partial \lambda_i(x)}{\partial W^0[i,p]} = x_{n^\star,p}
\]

So:
\[
\frac{\partial \mathcal{L}}{\partial W^0[i,p]} =
\begin{cases}
x_{n^\star,p} \cdot \frac{\partial \mathcal{L}}{\partial W^2[h_d^\star,d]}, & \text{if } i = i_d^\star\\
0, & \text{otherwise}
\end{cases}
\]

\paragraph{Summary.}
The above derivation shows that the subgradients propagate sparsely along the active paths:
\begin{itemize}
\item 
\(\displaystyle
\frac{\partial \mathcal L}{\partial W^2[h,d]}=
\mathbb{I}_{\{h=h_d^\star\}}\left(\hat{Y}_{d}(x_{n^\star})-\mathbb{I}_{\{d=y_{n^\star}\}}\right)
\).
\item 
\(\displaystyle
\frac{\partial \mathcal L}{\partial W^1[i,h]}=
\frac{\partial \mathcal L}{\partial W^2[h,d]}\quad\text{if }h=h_d^\star\text{ and }i=i_d^\star.
\)
\item 
\(\displaystyle
\frac{\partial \mathcal L}{\partial W^0[i,p]}=
 x_{n^\star,p}\,\frac{\partial \mathcal L}{\partial W^2[h_d^\star,d]}\quad\text{if }i=i_d^\star,
\)
\end{itemize}
\end{proof}

\subsection{confidence-histogram}
\label{confidence-histogram}
\begin{figure}[htb]
  \vskip 0.2in
  \begin{center}
    \centerline{\includegraphics[width=0.5\columnwidth]{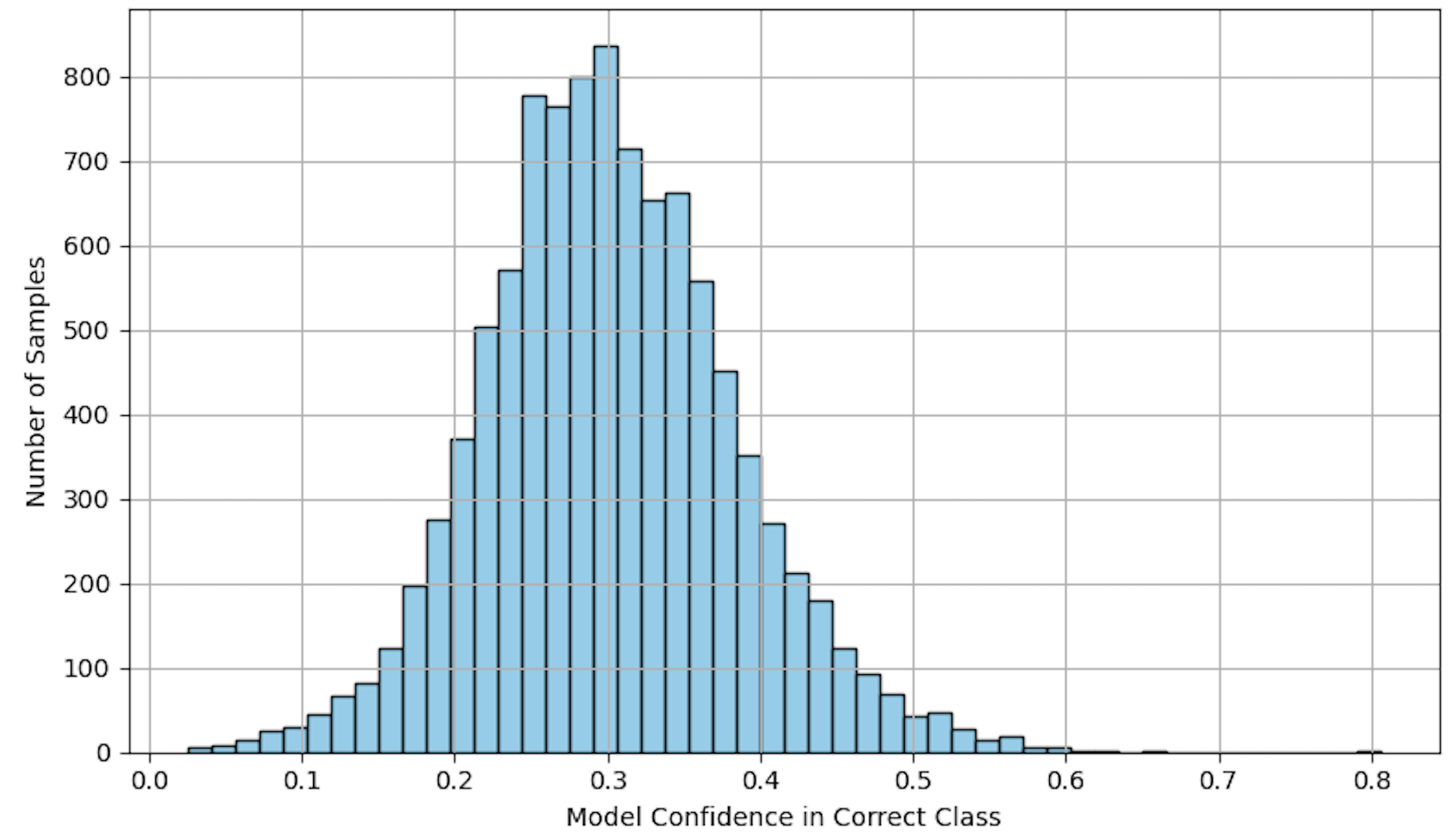}}
    \caption{Distribution of predicted probabilities for the true labels on MNIST using the LMM model.}
    \label{icml-historical}
  \end{center}
\end{figure}

\newpage
\subsection{MLP vs LMM on MNIST}
\label{sec:mnist_accuracy}

\begin{table}[h]
  \caption{\textbf{Training} performance on the MNIST dataset.}
  \label{tab:iris_train}
  \begin{center}
    \begin{small}
      \begin{sc}
        \begin{tabular}{lcccr}
          \toprule
         Model& MLP ($\frac{1}{N}$)&LMM ($\frac{1}{N}$) & LMM ($\max_{n}$)  \\
          \midrule
          max loss & 27.63& 5.02 & 1.64  \\
          avg loss & 0.05 & 1.23 &  1.21 \\
          accuracy & 99.15\% & 69\% & 91.54\% \\
          \bottomrule
        \end{tabular}
      \end{sc}
    \end{small}
  \end{center}
  \vskip -0.1in
\end{table}
\begin{table}[h!]
  \caption{\textbf{Testing} performance on the MNIST dataset.}
  \label{tab:iris_test}
  \begin{center}
    \begin{small}
      \begin{sc}
        \begin{tabular}{lcccr}
          \toprule
         Model& MLP ($\frac{1}{N}$)&LMM ($\frac{1}{N}$) & LMM ($\max_{n}$)  \\
          \midrule
          max loss & \textbf{27.63} & 4.55 & 3.59 \\
          avg loss & 0.047& 1.22  &  1.23 \\
          accuracy & 88\% & 69\%  & 88.75\%\\
          \bottomrule
        \end{tabular}
      \end{sc}
    \end{small}
  \end{center}
  \vskip -0.1in
\end{table}

\end{document}